\newtheorem{theorem}{Theorem}
\newtheorem{lemma1}{Lemma}[section]
\newtheorem{lemma2}[lemma1]{Lemma}
\newtheorem{lemma3}[lemma1]{Lemma}
\title {The Effect of Q-function Reuse on the Total Regret of Tabular, Model-Free, Reinforcement Learning}
\author{Volodymyr Tkachuk}
\affiliation{
  \institution{University of Waterloo}
  \city{Waterloo, Ontario}}
\email{vtkachuk@uwaterloo.ca}
\author{Sriram Ganapathi Subramanian}
\affiliation{
  \institution{University of Waterloo}
  \city{Waterloo, Ontario}}
\email{s2ganapa@uwaterloo.ca}
\author{Matthew E. Taylor}
\affiliation{
  \department{University of Alberta \\ Alberta Machine Intelligence Institute (Amii)}
  \institution{Edmonton, Alberta}}
\email{matthew.e.taylor@ualberta.ca}
\begin{abstract}
  Some reinforcement learning methods suffer from high sample complexity causing them to not be practical in real-world situations. 
  $Q$-function reuse, a transfer learning method, is one way to reduce the sample complexity of learning, potentially improving usefulness of existing algorithms. 
  Prior work has shown the empirical effectiveness of $Q$-function reuse for various environments when applied to model-free algorithms. 
  To the best of our knowledge, there has been no theoretical work showing the regret of $Q$-function reuse when applied to the tabular, model-free setting.
  We aim to bridge the gap between theoretical and empirical work in $Q$-function reuse by providing some theoretical insights on the effectiveness of $Q$-function reuse when applied to the $Q$-learning with UCB-Hoeffding algorithm. 
  Our main contribution is showing that in a specific case if $Q$-function reuse is applied to the $Q$-learning with UCB-Hoeffding algorithm it has a regret that is independent of the state or action space. 
  We also provide empirical results supporting our theoretical findings.
\end{abstract}
\keywords{Reinforcement Learning, Transfer Learning}
\newcommand{\BibTeX}{\rm B\kern-.05em{\sc i\kern-.025em b}\kern-.08em\TeX}
\DeclareMathOperator{\argmaxH}{argmax}   % Jan Hlavacek
\newcommand*\mycirc[1]{%
   \begin{tikzpicture}
     \node[draw,circle,inner sep=0.5pt, scale=0.7] {#1};
   \end{tikzpicture}}
\begin{document}

%%% The following commands remove the headers in your paper. For final 
%%% papers, these will be inserted during the pagination process.

\pagestyle{fancy}
\fancyhead{}

%%% The next command prints the information defined in the preamble.

\maketitle 

%%%%%%%%%%%%%%%%%%%%%%%%%%%%%%%%%%%%%%%%%%%%%%%%%%%%%%%%%%%%%%%%%%%%%%%%

\section{Introduction}

In reinforcement learning (RL), an agent interacts with an environment and tries to maximize its expected sum of rewards. 
Many algorithms, such as Q-learning with $\epsilon$-greedy exploration \cite{sutton2018reinforcement}, can suffer from poor sample complexity \cite{kearns2002near}. 
This is a problem in real-world situations where an agent may receive a limited amount of samples to learn an optimal policy. 
Such real-world environments serve as motivation to reduce the sample complexity of RL algorithms.

Transfer learning (TL) is a method used in RL as one way to reduce an agent’s training time \cite{taylor2009transfer}. 
The key idea is that an agent can learn a target task faster by transferring information from a previously learned source task, similar to how humans can learn algebra more quickly by transferring knowledge from previous tasks that require addition and multiplication. 
Although the concept of TL is intuitively appealing, its effectiveness has been mostly shown through empirical studies \cite{zhu2020transfer}.
As such, we aim to provide new theoretical results for one TL method in RL, $Q$-function reuse. 

$Q$-function reuse is the process of training an agent on a simple \emph{source} Markov decision process (MDP) $\mathcal{M}_S$ and then transferring its learned $Q$-function to a more complex, yet related \emph{target} MDP $\mathcal{M}_T$. The goal is to improve the sample complexity when compared to just training in $\mathcal{M}_T$ from the start. Sample complexity is loosely defined as how much data an agent must collect in order to learn a good policy \cite{kakade2003sample}. 
If the agent was trained in $\mathcal{M}_S$ until convergence to the optimal policy, transferring the $Q$-function from $\mathcal{M}_S$ to $\mathcal{M}_T$ can sometimes be thought of as a near-optimal $Q$-function initialization in $\mathcal{M}_T$, since $\mathcal{M}_S$ is related to $\mathcal{M}_T$.
Therefore, we propose that one method to study $Q$-function reuse is to study the effectiveness of near-optimal $Q$-function initialization. 

Since the effectiveness of $Q$-function reuse has been mostly shown in the model-free setting \cite{zhu2020transfer}, and it is easier to perform a theoretical analysis in the tabular domain, we choose to study the effects of $Q$-function reuse on a tabular, model-free algorithm that is provably efficient, $Q$-learning with UCB-Hoeffding \cite{jin2018q}.
In this work we study the setting where we are given the $Q$-function from some agent that has previously been trained on a simple MDP $\mathcal{M}_S$. 
We refer to this $Q$-function as the pre-trained $Q$-function from $\mathcal{M}_S$. 
We will answer the following question: 
\begin{center}
\fbox{\begin{minipage}{0.45\textwidth}
Will the total regret of the $Q$-learning with UCB-Hoeffding algorithm be lower in a complex (target) MDP $\mathcal{M}_T$, if it is initialized with a pre-trained $Q$-function from a related, but simpler (source) MDP $\mathcal{M}_S$? 
\end{minipage}}
\end{center}

For our analysis, we assume the $Q$-function initialization is optimal for all but one value.
Although this is a rather strong assumption, we believe it provides useful insights and a promising starting point for future work. 
In general, the $Q$-function initialization in a target MDP after $Q$-function reuse has some nearly optimal $Q$-values and some $Q$-values that are far from optimal.
Therefore, to address the general case of $Q$-function reuse, future work might include increasing the number of not optimal $Q$-values to more than one and relaxing the optimal $Q$-function initialization for the remaining $Q$-values to some notion of sub-optimality.
To the best of our knowledge, there has not been any theoretical work showing the total regret of $Q$-function reuse applied on a tabular, model-free algorithm.
We perform a regret analysis, showing that the $Q$-learning with the UCB-Hoeffding algorithm \cite{jin2018q}, along with our initialization assumptions achieves a total regret of only $O(\sqrt{H^2T\iota'})$ (independent of the state and action space), while regular $Q$-learning with UCB-Hoeffding suffers a regret of $O(\sqrt{H^4SAT\iota})$ \cite{jin2018q}. 
Empirical results are presented to support these theoretical claims.

%%%%%%%%%%%%%%%%%%%%%%%%%%%%%%%%%%%%%%%%%%%%%%%%%%%%%%%%%%%%%%%%%%%%%%%%

\section{Preliminaries}

We borrow standard notations from \citet{jin2018q}, that we will include in this section for quick reference. 
To provide a fair comparison between our algorithm and the $Q$-learning with UCB-Hoeffding algorithm, proposed by \citet{jin2018q}, we maintain a similar problem setting. 
Extending this work to other settings (e.g., stochastic reward, terminating states,  discounting, etc.) is left for future work.
We begin by describing the Markov decision process, MDP$(\mathcal{S}, \mathcal{A}, H, \mathbb{P}, r)$. 
The set of states is $\mathcal{S}$, with $|\mathcal{S}| = S$. 
The set of actions is $\mathcal{A}$, with $|\mathcal{A}| = A$. The number of steps per episode (horizon) is $H$. 
The transition dynamics are given by $\mathbb{P}$, where $\mathbb{P}_h(\cdot|x, a)$ gives the next state distribution if action $a$ was taken in state $x$ at step $h \in [H]$.
The deterministic reward function is $r_h(x, a)$, which provides a reward in the range $[0, 1]$ for taking action $a$ in state $x$ at step $h$. 
The agent acts in this MDP for $K$ episodes. We let $T = K H$ denote the total number of steps the agent takes in the MDP. 

For each episode $k \in [K]$ an initial state $x_1^k$ is chosen randomly. 
At each step $h$ and episode $k$ the agent observes a state $x_h^k$, takes action $a_h^k$, receives reward $r_h(x_h^k, a_h^k)$, and then transitions to its next state drawn from the distribution $\mathbb{P}_h(\cdot|x_h^k, a_h^k)$. 
The transition dynamics and reward function were chosen to depend on the step $h$ for generality and to remain consistent with prior work \cite{jin2018q}. 
Note that if dependence on $h$ is not required then the transition dynamics and reward function can be set the same for all $h$ and all the results shown in this work will still hold.
The episode ends when $x_{H+1}^k$ is reached.
% \Vlad{Matt you asked about why it doesn't terminate at goal state and also if $\gamma = 1$ matters. The main reason for using fixed finite horizon is because the original UCB Hoeffding algo used it so it is much easier to compare to it if we do the same. I also believe the analysis might be harder if we don't assume each episode takes exactly $H$ steps. Regarding $\gamma = 1$, this was also used since UCB Hoeffding used it. However, I think in general $\gamma = 1$ is used for finite horizon cases since you know you terminate at some point so your return wont blow up. Setting $\gamma < 1$ is usually done to just make sure things converge in infinite trajectory case. It's also not obvious to me how setting $\gamma < 1$ would effect things, so that would take extra work to add if we wanted.}

There is a separate policy $\pi_h$ for each step $\{\pi_h: \mathcal{S} \to \mathcal{A}\}_{h \in [H]}$. 
We use $V_h^{\pi}: \mathcal{S} \to \mathbb{R}$ to denote the value function at step $h$ under policy $\pi$. 
The expected sum of rewards under policy $\pi$, from $x_h=x$ until the end of the episode is given by $V_h^{\pi}(x)$. 
This is represented as:
$$V_h^{\pi}(x) := \mathbb{E}\left[ \sum_{h'=h}^H r_{h'}(x_{h'}, \pi_{h'}(x_{h'}))|x_h = x \right]$$

Similarly, we define the state-action value function as $Q_h^\pi: \mathcal{S} \times \mathcal{A} \to \mathbb{R}$. 
The expected sum of rewards under policy $\pi$, from state $x_h=x$ after taking action $a_h = a$ until the end of the episode is given by $Q_h^\pi (x, a)$. 
This is represented as: 
$$Q_h^{\pi}(x, a) := r_h(x, a) + \mathbb{E} \left[\sum_{h'= h+1}^H r_{h'} (x_{h'}, \pi_{h'}(x_{h'})) | x_h = x, a_h = a \right]$$

Since the state space, action space, and horizon are all finite, there always exists an optimal policy $\pi^*$ which gives the optimal value $V_h^*(x) := \text{sup}_\pi V_h^\pi(x)$, for all $x \in \mathcal{S}$ and $h \in [H]$ \cite{azar2017minimax}. 
To simplify notation we will denote $[\mathbb{P}_h V_{h+1}](x, a) := \mathbb{E}_{x'\sim\mathbb{P}(\cdot|x, a)} V_{h+1}(x')$. Using this notation we have the Bellman equations and Bellman optimality equations as follows:

\begin{equation}
\label{Bellman equations}
    \begin{cases}
      V_h^\pi(x) = Q_h^\pi(x, \pi_h(x))\\
      Q_h^\pi(x, a) = (r_h + \mathbb{P}_h V_{h+1}^{\pi})(x, a)\\
      V_{H+1}^\pi(x) = 0 \quad \forall x \in \mathcal{S}
    \end{cases} \\ 
    \begin{cases}
      V_h^*(x) = \text{max}_{a \in \mathcal{A}} Q_h^*(x, a)\\
      Q_h^*(x, a) := (r_h + \mathbb{P}_h V_{h+1}^*)(x, a)\\
      V_{H+1}^*(x) = 0 \quad \forall x \in \mathcal{S}
    \end{cases}
\end{equation}

We will use $\pi_h^k$ to denote the agent's policy at episode $k$. 
Finally, the performance metric of interest is the total regret, defined as:
$$\text{Regret}(K) = \sum_{k=1}^K[V_1^*(x_1^k) - V_1^{\pi_h^k}(x_1^k)]$$

\section{Results}

In this section we present our algorithm, $Q$-learning with UCB-Hoeffding and Max-Optimal Initialization, a modified version of Algorithm 1 from \citet{jin2018q}. We also introduce a theorem that shows the total regret of our algorithm is $O(\sqrt{H^2T\iota'})$. 

As a starting point to answering our question, presented in the introduction, we propose using an ideal $Q$-function initialization to model a possible pre-trained $Q$-function we might receive. 
We call this \textit{Max-Optimal Initialization} because it is the maximum number of assumptions that can be made before the $Q$-function initialization becomes the optimal $Q$-function for all states, actions, and steps. 
In words, we initialize the $Q$-function to the optimal $Q$-function for all $(x, a, h) \in \mathcal{S} \times \mathcal{A} \times [H]$ except for state $x1$, action $a1$ and step $h=1$, which is initialized to $H$ (as per the $Q$-learning with UCB-Hoeffding algorithm). 
Since updating the $Q$-function at any of the optimal states, actions, and steps could potentially make it sub-optimal, we make the additional assumption that the $Q$-function is only updated for $(x1, a1, h=1)$.
We only keep track of how many times $(x1, a1, h=1)$ was visited using a counter $N_1(x1, a1)$ initialized to $0$. 
Since the $Q$-function is not updated for any other $(x, a, h)$, there is no need to keep track of how many times any other $(x, a, h)$ is visited. 
In mathematical notation the initialization can be stated as follows:
\begin{equation}
\begin{aligned}
\label{Initialization equations}
    & Q_1(x1, a1) \leftarrow H \ \text{and} \ N_1(x1, a1) \leftarrow 0 \\
    & Q_h(x, a) \leftarrow Q^*_h(x, a) \ \text{for all} \ (x, a, h) \in \mathcal{S} \times \mathcal{A} \times [H] \backslash (x1, a1, h=1) 
\end{aligned}
\end{equation}

The Max-Optimal initialization can model the scenario where an agent is trained on some simple MDP $\mathcal{M}_0$ until convergence, but then a new action $a1$ is introduced at state $x1$ and step $h=1$ (MDP $\mathcal{M}_F$). 
In $\mathcal{M}_F$ the agent is essentially initialized with an optimal $Q$-function for all $(x, a, h)$, except $(x1, a1, h=1)$.
We would like to highlight that in this scenario $\mathcal{M}_0$ is considered related to MDP $\mathcal{M}_F$ since it has the same transition dynamics and reward function for all but one state, action, and step. $\mathcal{M}_0$ is also considered simpler than $\mathcal{M}_F$ since it is exactly $\mathcal{M}_F$, except without $(x1, a1, h=1)$. 
We present Algorithm \ref{algorithm 1}, $Q$-learning with UCB-Hoeffding and Max-Optimal Initialization, which combines the $Q$-learning with UCB-Hoeffding algorithm with the Max-Optimal Initialization assumptions. 
The modifications we made to the $Q$-learning with UCB-Hoeffding algorithm are shown in blue in Algorithm \ref{algorithm 1}.

We now describe the steps performed in our algorithm.
The $Q$-function and step counter $N_1(x1, a1)$ are initialized using Max-Optimal initialization.
For each episode $k \in [K]$ the agent starts in a random state $x_1^k$.
Then, for each step $h \in [H]$ and state $x_h^k \in \mathcal{S}$ the agent selects the action $a \in \mathcal{A}$ that maximizes its current estimate of $Q_h(x, a)$. 
The next state $x_{h+1}^k \in \mathcal{S}$ is sampled from $\mathbb{P}_h(\cdot|x, a)$. 
If the current state, action and step $(x_h^k, a_h^k, h)$ is $(x1, a1, h=1)$, the agent updates its $Q$-function using the following rule:
\begin{equation}
\label{update equation}
 Q_h(x_h^k, a_h^k) \leftarrow (1-\alpha_t) Q_h(x_h^k, a_h^k) + \alpha_t[r_h(x_h^k, a_h^k) + V_{h+1}(x_{h+1}^k) + b_t]
\end{equation}
where $t$ is the step counter for how many times the agent has visited $(x1, a1)$ at step $h=1$, $b_t$ is the confidence bonus indicating the agents confidence in its $Q$-value at $(x1, a1, h=1)$, and the learning rate $\alpha$ is $\alpha_t := \frac{H+1}{H+t}$.
This choice of learning rate $\alpha_t$ is crucial to obtain a total regret that is not exponential in $H$ \cite{jin2018q}.

% For completeness we include the $Q$-learning with UCB-Hoeffding algorithm that we modify in this work as Algorithm \ref{algorithm 2}. 
% The key difference between our algorithm and the $Q$-learning with UCB-Hoeffding algorithm is that we make the assumption of Max-Optimal initialization while $Q$-learning with UCB-Hoeffding assumes all $Q$-values are initialized optimistically to $H$. 
We removed the update $V_h(x_h) \leftarrow \min\{H, \max_{a' \in \mathcal{A}}Q_h(x_h, a')\}$ since the $Q$-function in our algorithm is only updated for $(x1, a1, h=1)$, which requires knowledge of $V_{h+1}^k(x_{h+1}^k)$. 
But $V_{h+1}^k(x_{h+1}^k) = V_{h+1}^*(x_{h+1}^k)$ for $h \geq 1$ due to the Max-Optimal initialization, meaning we never need to update $V_{h+1}^k(x_{h+1}^k)$.

We present the following theorem for the $Q$-learning with UCB-Hoeffding and Max-Optimal Initialization algorithm:
\begin{theorem} [Hoeffding Max-Optimal]
\label{Hoeffding MO} 
There exists an absolute constant $c > 0$ such that, for any $p \in (0, 1)$, if we choose $b_t = c\sqrt{H^3\iota'/t}$, then with probability $1 - p$ the total regret of $Q$-learning with UCB-Hoeffding and Max-Optimal Initialization (Algorithm 1) is at most $O(\sqrt{H^2T\iota'})$, where $\iota' := log(K/p)$.
\end{theorem}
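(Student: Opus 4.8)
The plan is to exploit the fact that, under Max-Optimal initialization, the only place where the running policy $\pi_h^k$ can deviate from an optimal policy is at state $x1$ at step $h=1$, so that all regret is concentrated there and the analysis collapses to a single-arm mean-estimation problem. First I would observe that since $Q_h = Q_h^*$ for every triple except $(x1, a1, 1)$ and is never updated off that triple, the greedy policy agrees with an optimal policy at every step $h \geq 2$ and at every state $x \neq x1$ at step $h=1$. Consequently $V_1^{\pi_h^k}(x_1^k) = V_1^*(x_1^k)$ whenever $x_1^k \neq x1$, and whenever $x_1^k = x1$ the continuation after the first action is optimal, giving $V_1^{\pi_h^k}(x1) = Q_1^*(x1, a_1^k)$ where $a_1^k$ is the greedy action. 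Thus only episodes starting in $x1$ contribute, and the per-episode regret equals $V_1^*(x1) - Q_1^*(x1, a_1^k)$.

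The key simplification is that the update \eqref{update equation} uses $V_{h+1}(x_{h+1}^k) = V_{h+1}^*(x_{h+1}^k)$ exactly, so each sample $r_1(x1,a1) + V_2^*(x_2^k)$ is an unbiased, $[0,H]$-bounded estimate of $Q_1^*(x1,a1) = r_1 + \mathbb{P}_1 V_2^*$. Writing $Q_1^k(x1,a1)$ after $t$ visits as the usual convex combination $\alpha_t^0 H + \sum_{i=1}^t \alpha_t^i[r_1 + V_2^*(x_2^{k_i}) + b_i]$ of the initial value and the samples, where $\alpha_t^i$ are the accumulated learning-rate weights for $\alpha_t = (H+1)/(H+t)$, I would subtract $Q_1^*(x1,a1)$ and isolate the mean-zero martingale $\sum_{i=1}^t \alpha_t^i[V_2^*(x_2^{k_i}) - \mathbb{P}_1 V_2^*(x1,a1)]$. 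An Azuma--Hoeffding bound together with the weight properties $\sum_i \alpha_t^i = 1$ and $\sum_i (\alpha_t^i)^2 \leq 2H/t$ from \cite{jin2018q} controls this term by $O(\sqrt{H^3\iota'/t})$; because there is only a single triple to track, the union bound runs over $k$ alone, which is exactly what produces $\iota' = \log(K/p)$ in place of Jin's $\iota = \log(SAT/p)$ and removes the $S,A$ dependence. Choosing $c$ large enough that the accumulated bonus $\sum_{i=1}^t \alpha_t^i b_i$, itself of order $\sqrt{H^3\iota'/t}$, dominates this fluctuation then yields, on a probability-$(1-p)$ event, both optimism $Q_1^k(x1,a1) \geq Q_1^*(x1,a1)$ and the overestimate bound $Q_1^k(x1,a1) - Q_1^*(x1,a1) \leq O(\sqrt{H^3\iota'/t})$ for all $t \geq 1$ (and the trivial bound $\leq H$ for $t=0$).

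I would then assemble the regret. Optimism at the optimal action, which holds by exact equality when it differs from $a1$ and by the optimism bound when it equals $a1$, gives $Q_1^k(x1, a_1^k) \geq V_1^*(x1)$, so the per-episode regret is at most $Q_1^k(x1,a_1^k) - Q_1^*(x1,a_1^k)$; this bound vanishes unless the greedy action is $a1$, in which case it equals the overestimate from the previous step. Crucially, the greedy action at $(x1,1)$ equals $a1$ precisely on the episodes that visit $(x1,a1,1)$ and increment the counter, so if these are the $m = 1, \dots, N$ visits with $N \le K$, the total regret is at most $H + \sum_{j=1}^{N-1} O(\sqrt{H^3\iota'/j}) = O(H) + O(\sqrt{H^3 N\iota'})$. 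Using $\sum_{j} j^{-1/2} = O(\sqrt N)$ and $N \leq K = T/H$ collapses this to $O(\sqrt{H^3 (T/H)\iota'}) = O(\sqrt{H^2 T\iota'})$, as claimed, since the additive $O(H)$ is absorbed.

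The main obstacle I anticipate is the concentration/optimism step: getting the Azuma--Hoeffding bound to hold uniformly over all visit counts $t$ while correctly calibrating $c$ so the bonus both certifies optimism and is not so large that it inflates the per-episode overestimate beyond $O(\sqrt{H^3\iota'/t})$. Everything else, namely the localization of the regret to $x1$ and the harmonic-sum bookkeeping, is comparatively routine once the single-triple structure and the exactness $V_{h+1} = V_{h+1}^*$ are recognized.
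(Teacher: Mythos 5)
Your proposal is correct and follows essentially the same route as the paper's proof: the compact weighted form of the update $Q_1^k(x1,a1) = \alpha_t^0 H + \sum_{i=1}^t \alpha_t^i[r_1 + V_2^*(x_2^{k_i}) + b_i]$ (the paper's Lemma 4.2, using $V_{h+1}^k = V_{h+1}^*$), Azuma--Hoeffding with a union bound over the $K$ possible visit counts yielding $\iota' = \log(K/p)$ and the two-sided optimism/overestimate bound (Lemma 4.3), localization of all regret to episodes where the greedy action at $(x1, h=1)$ is $a1$, and the harmonic-sum bound $H + O(\sqrt{H^3 N \iota'})$ with $N \leq K = T/H$. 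The only differences are cosmetic (you bound the per-episode regret directly by $Q_1^k(x1,a_1^k) - Q_1^*(x1,a_1^k)$ rather than via the paper's intermediate $\delta_1^k = (V_1^k - V_1^{\pi_1^k})(x1)$, and your single-good-event probability accounting is in fact cleaner than the paper's ``applied twice, rescale $p \to p/2$'' step), so no gap to report.
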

Note that we reserve $\iota$ for when we refer to the $Q$-learning with UCB-Hoeffding algorithm, where $\iota := log(SAT/p)$. 
In our algorithm this term is reduced to $\iota' := log(K/p)$ due to our added assumptions.

% \MET{Don't need the 3 end statements in Algo 1?}

\begin{algorithm}[ht]
\SetAlgoLined
 \textcolor{blue}{Initialize $Q_1(x1, a1) \leftarrow H$ and $N_1(x1, a1) \leftarrow 0$} \\ 
 \textcolor{blue}{Initialize $Q_h(x, a) \leftarrow Q^*_h(x, a)$ for all $(x, a, h) \in \mathcal{S} \times \mathcal{A} \times [H] \backslash (x1, a1, h=1)$}\\ 
 \For{episode $k = 1,...,K$}{
  receive $x_1^k$\\
  \For{step $h = 1,...,H$}{
   Take action $a_h^k \leftarrow \argmaxH_{a'} {Q_h(x_h^k, a')}$, and observe $x_{h+1}^k$\\
   \If{\textcolor{blue}{$x_h^k = x1$ and $a_h^k = a1$ and $h = 1$}}{
    $t = N_h(x_h^k, a_h^k) \leftarrow N_h(x_h^k, a_h^k) + 1$; $b_t \leftarrow c\sqrt{H^3\iota'/t}$\\
    $Q_h(x_h^k, a_h^k) \leftarrow (1 - \alpha_t)Q_h(x_h^k, a_h^k) + \alpha_t[r_h(x_h^k, a_h^k) + V_{h+1}(x_{h+1}^k) + b_t]$\\
   }
  }
 }
 \caption{$Q$-learning with UCB-Hoeffding and Max-Optimal Initialization}
\label{algorithm 1}
\end{algorithm}

% \begin{algorithm}[ht]
% \SetAlgoLined
%  Initialize $Q_h(x, a) \leftarrow H$ and $N_h(x, a) \leftarrow 0$  for all $(x, a, h) \in \mathcal{S} \times \mathcal{A} \times [H]$\\
%  \For{episode $k = 1,...,K$}{
%   receive $x_1^k$\\
%   \For{step $h = 1,...,H$}{
%   Take action $a_h^k \leftarrow \argmaxH_{a'} {Q_h(x_h^k, a')}$, and observe $x_{h+1}^k$\\
%     $t = N_h(x_h^k, a_h^k) \leftarrow N_h(x_h^k, a_h^k) + 1$; $b_t \leftarrow c\sqrt{H^3\iota'/t}$\\
%     $Q_h(x_h^k, a_h^k) \leftarrow (1 - \alpha_t)Q_h(x_h^k, a_h^k) + \alpha_t[r_h(x_h^k, a_h^k) + V_{h+1}(x_{h+1}^k) + b_t]$\\
%     $V_h(x_h) \leftarrow \min\{H, \max_{a' \in \mathcal{A}}Q_h(x_h, a')\}$
%   }
%  }
%  \caption{$Q$-learning with UCB-Hoeffding \cite{jin2018q}}
% \label{algorithm 2}
% \end{algorithm}

%%%%%%%%%%%%%%%%%%%%%%%%%%%%%%%%%%%%%%%%%%%%%%%%%%%%%%%%%%%%%%%%%%%%%%%%

\section{Proof for Q-learning with UCB-Hoeffing and Max-Optimal Initialization}

We provide a full proof of Theorem \ref{Hoeffding MO}, following similar steps and notation to that mentioned in \citet{jin2018q}. We first introduce some notation for convenience.

We denote by $\mathbb{I}[A]$ as the indicator function for an event $A$. 
Recall that $[\mathbb{P}_h V_{h+1}](x, a) := \mathbb{E}_{x' \sim \mathbb{P}_h(\cdot | x, a)} V_{h+1}(x')$. We now introduce its empirical counterpart, $[\hat{\mathbb{P}}_h^k V_{h+1}](x, a) := V_{h+1}(x_{h+1}^k)$, which is defined only for $(x, a) = (x_h^k, a_h^k)$.
Recalling that $\alpha_t = \frac{H + 1}{H + t}$, we introduce the following:
\begin{equation}
\label{alpha notation}
    \alpha_t^0 = \prod_{j=1}^t (1-\alpha_j), \qquad \alpha_t^i = \alpha_i\prod_{j = i+1}^t(1-\alpha_j)
\end{equation}
Recall that $\sum_{j}^{t<j}(\cdot) = 0$ and $\prod_{j}^{t<j}(\cdot) = 1$.
Therefore, the following properties hold: 
$$\sum_{i=1}^t \alpha_t^i = 1 \ \text{and} \ \ \alpha_t^0 = 0 \ \text{for} \ t \geq 1, \  \sum_{i=1}^t \alpha_t^i = 0 \ \text{and} \ \alpha_t^0 = 1 \ \text{for} \ t = 0$$

The motivation for introducing this notation is to simplify the recursive $Q$-function update formula (as seen in equation (\ref{update equation})).
From equation (\ref{update equation}) and equation (\ref{alpha notation}) we have:
\begin{equation}
\label{Compact Q-update}
    Q_h^k(x, a) = \alpha_t^0 H + \sum_{i=1}^t \alpha_t^i[r_h(x,a) + V_{h+1}^*(x_{h+1}^{k_i}) + b_i]
\end{equation}
which only applies for $(x1, a1, h=1), \ \forall k \in [K]$, as discussed in equation (\ref{update equation}). 

\subsection*{Proof Details}
We now introduce some Lemmas that will help us in the proof of Theorem \ref{Hoeffding MO}.
For completeness we repeat Lemme 4.1 exactly as stated in \citet{jin2018q}. 

\begin{lemma1}
\label{alpha properties}
The following properties hold for $\alpha_t^i$:
 \renewcommand{\theenumi}{\alph{enumi}}
 \begin{enumerate}
   \item $\frac{1}{\sqrt{t}} \leq \sum_{i=1}^t \frac{\alpha_t^i}{\sqrt{i}} \leq \frac{2}{\sqrt{t}}$ for every $ t \geq 1$
   \item $\max_{i \in [t]} \alpha_t^i \leq \frac{2H}{t}$ and $\sum_{i=1}^t(\alpha_t^i)^2 \leq \frac{2H}{t}$ for every $t \geq 1$
   \item $\sum_{t=1}^\infty \alpha_t^i = 1 + \frac{1}{H}$ for every $i \geq 1$
 \end{enumerate}
\end{lemma1}

\begin{proof}
    See proof of Lemma 4.1 in \citet{jin2018q}
\end{proof}

We now present modified versions of Lemma 4.2 and Lemma 4.3 from \citet{jin2018q}, which we will use in our proof of Theorem \ref{Hoeffding MO}.

\begin{lemma2} [Difference in Q] 
\label{Diff in Q}
For $(x1, a1, h=1) \in \mathcal{S} \times \mathcal{A} \times [H]$ and episode $k \in [K]$, let $t = N_1^k(x1, a1)$ and suppose $(x1, a1)$ was previously taken at step h=1 of episode $k_1,...,k_t < k$. Then:
\begin{align*}
(Q_1^k - Q_1^*)(x1, a1) = 
&\alpha_t^0(H - Q_1^*(x1, a1))  \\
&+ \sum_{i=1}^t \alpha_t^i \left[ [(\hat{\mathbb{P}}_1^{k_i} - \mathbb{P}_1)V_{2}^*](x1, a1) + b_i \right]
\end{align*}
\end{lemma2}

\begin{proof} [Proof of Lemma \ref{Diff in Q}]
From the Bellman optimality equation we have $Q_h^*(x, a) = (r_h + \mathbb{P}_h V_{h+1}^*)(x, a)$.
Recalling that $[\hat{\mathbb{P}}_h^k V_{h+1}](x, a) := V_{h+1}(x_{h+1}^k)$, and the fact that $\sum_{i=0}^t \alpha_t^i = 1$, we have:
\begin{align*}
    Q_h^*(x, a) = \ & r_h(x, a) + [\mathbb{P}_h V_{h+1}^*](x, a) \\ 
    = \ & \alpha_t^0 Q_h^*(x, a) + \sum_{i=1}^t \alpha_t^i \left[r_h(x, a) + [\mathbb{P}_h V_{h+1}^*](x, a)\right] \\ 
    = \ &\alpha_t^0 Q_h^*(x, a) \\ 
    &+ \sum_{i=1}^t \alpha_t^i \left[r_h(x, a) + [(\mathbb{P}_h - \hat{\mathbb{P}}_h^{k_i})V_{h+1}^*](x, a) + V_{h+1}^*(x_{h+1}^{k_i})\right]  
\end{align*}
which is true for all $(x, a, h) \in \mathcal{S} \times \mathcal{A} \times [H]$. Subtracting the above equation from formula (\ref{Compact Q-update}) for $(x1, a1, h=1)$, we obtain Lemma \ref{Diff in Q}:
 
\begin{align*}
    (Q&_1^k - Q_1^*)(x1, a1) \\
    = \ & \alpha_t^0(H - Q_1^*(x1, a1)) \\ 
    &+ \sum_{i=1}^t \alpha_t^i \left[(V_{2}^{k_i} - V_{2}^*)(x_{2}^{k_i}) + [(\hat{\mathbb{P}}_1^{k_i} - \mathbb{P}_1)V_{2}^*](x1, a1) + b_i\right]\\ 
    \stackrel{\mycirc{1}}{=} \ & \alpha_t^0(H - Q_1^*(x1, a1)) \\
    &+ \sum_{i=1}^t \alpha_t^i \left[ [(\hat{\mathbb{P}}_1^{k_i} - \mathbb{P}_1)V_{2}^*](x1, a1) + b_i \right]
\end{align*}
Where \textcircled{1} holds because $V_h^k(x_h^k) = V_h^*(x_h^k)$ for $h \geq 2, \forall k \in [K]$
\end{proof}

Next, we present a modified version of Lemma 4.3 from \citet{jin2018q}, for $(x1, a1, h=1)$, which shows that $Q_1^k$ is an upper bound on $Q_1^*$ with high probability.
 
\begin{lemma3} [bound on $(Q^k_1 - Q^*_1)(x1, a1)$]
\label{bound on Q}
There exists an absolute constant $c > 0$ such that, for any $p \in (0, 1)$, letting $b_t = c\sqrt{H^3\iota'/t}$, we have $\beta_t = 2\sum_{i=1}^t \alpha_t^i b_i \leq 4c\sqrt{H^3\iota'/t}$, where $\iota' = log(K/p)$, and with probability at least $1-p$, the following holds for $(x1, a1, h=1), \ \forall k \in [K]$:
$$0 \leq (Q_1^k - Q_1^*)(x1, a1) \leq \alpha_t^0 H + \beta_t$$
where $t = N_1^k(x1, a1)$ and $k_1,...,k_t < k$ are the episodes where $(x1, a1)$ was taken at step $h=1$.
\end{lemma3}

\begin{proof} [Proof of Lemma \ref{bound on Q}] 
For $(x1, a1, h=1)$, let us denote $k_0=0$ and denote
$$k_i = \min\left(\{k \in [K] \ | \ k \geq k_{i-1} \wedge (x_1^k, a_1^k) = (x1, a1)\} \cup \{K + 1\}\right)$$
In words, this means that $k_i$ is the episode when $(x1, a1)$ was taken at step $h=1$ for the $i$th time, and $k_i$ equals $K + 1$ if $(x1, a1)$ was taken for fewer than $i$ times. 
The random variable $k_i$ can be thought of as a stopping time. 
If we let $\mathcal{F}_i$ (filtration) be the $\sigma$-algebra generated by all the random variables until episode $k_i$, and step $h=1$. 
Then, $\big(\mathbb{I}[k_i \leq K] \cdot [(\hat{\mathbb{P}}_1^{k_i} - \mathbb{P}_1) V_2^*](x1, a1)\big)_{i=1}^\tau$ is a Martingale difference sequence w.r.t.\ 
the filtration $\{\mathcal{F}_i\}_{i\geq0}$.
In words, the filtration $\{\mathcal{F}_i\}_{i\geq 0}$ can be thought of as a sequence of increasing information about state $x1$, action $a1$ at step $h=1$, where the filtration satisfies $\mathcal{F}_1 \subseteq \mathcal{F}_2 ... \subseteq \mathcal{F}_i$ and the three properties that define a $\sigma$-algebra. 
% Namely, the filtration generated by all the random variables until episode $k_i$ is a collection of subsets that include all the random variable until episode $k_i$, is closed under compliment, and is closed under countable unions.
By Azuma-Hoeffding and a union bound, we have with probability at least $1 - p$:
\begin{align}
\label{A-H and Union bound}
    \forall \tau \in [K] 
    &: \ \left| \sum_{i=1}^\tau \alpha_\tau^i \cdot \mathbb{I}[k_i \leq K] \cdot [(\hat{\mathbb{P}}_1^{k_i} - \mathbb{P}_1)V_{2}^*](x1, a1) \right| \nonumber \\
    &\leq \frac{cH}{2}\sqrt{\sum_{i=1}^\tau (\alpha_\tau^i)^2 \cdot \iota'} \leq c\sqrt{\frac{H^3\iota'}{\tau}}
\end{align}
for some absolute constant c. Because inequality (\ref{A-H and Union bound}) holds for all fixed $\tau \in [K]$ uniformly, it also holds for $\tau = t = N_1^k(x1, a1) \leq K$, which is a random variable, where $k \in [K]$. 
Also note $\mathbb{I}[k_i \leq K] = 1$ for all $i \leq N_1^k(x1, a1)$. 
We now have:   

\begin{equation}
\label{bonus bound}
     \left| \sum_{i=1}^t \alpha_\tau^i \cdot [(\hat{\mathbb{P}}_1^{k_i} - \mathbb{P}_1)V_{2}^*](x1, a1) \right| \leq c\sqrt{\frac{H^3\iota'}{\tau}} \ \ \text{where} \ \ t=N_1^k(x1, a1)
\end{equation}
If we choose $b_t = c\sqrt{H^3\iota'/t}$ for the same constant $c$ as in inequality (\ref{bonus bound}), we have $\beta_t/2 = \sum_{i=1}^t \alpha_t^ib_i \in [c\sqrt{H^3\iota'/t}, 2c\sqrt{H^3\iota'/t}]$ according to Lemma \ref{alpha properties}.a. Then the right-hand side of Lemma \ref{bound on Q} follows immediately from Lemma \ref{Diff in Q} and inequality (\ref{bonus bound}). The left-hand side also follows from Lemma \ref{Diff in Q}.
\end{proof}

\noindent We are now ready to prove Theorem \ref{Hoeffding MO}. 

\begin{proof} [Proof of Theorem \ref{Hoeffding MO}]
We follow a similar procedure to \citet{jin2018q}, except we do not have to decompose the regret into a recursive form in $h$ since we are interested in a fixed $h=1$. 
We denote $\delta_1^k := (V_1^k - V_1^{\pi_1^k})(x1)$.

By Lemma \ref{bound on Q}, we have that with at least $1 - p$ probability, $Q_1^k(x1, a1) \geq Q_1^*(x1, a1)$ and thus $V_1^k(x1) \geq V_1^*(x1)$. We also know that $Q_1^k(x, a) =  Q_1^*(x, a), \ \forall (x, a) \in \mathcal{S} \times \mathcal{A} \backslash (x1, a1)$ and thus $V_1^k(x) = V_1^*(x), \ \forall x\in \mathcal{S} \backslash x1$. The regret can be upper bounded:
\begin{align}
 \text{Regret}(K) 
 &= \sum_{k=1}^K(V_1^* - V_1^{\pi_1^k})(x_1^k) \nonumber 
 \leq \sum_{k=1}^K(V_1^k - V_1^{\pi_1^k})(x_1^k) \nonumber \\ 
 &\stackrel{\mycirc{1}}{\leq} \sum_{k=1}^K(V_1^k - V_1^{\pi_1^k})(x1)
 = \sum_{k=1}^K \delta_1^k \label{regret bound with delta}
\end{align}
where inequality \textcircled{1} holds because $\sum_{k=1}^K(V_1^k - V_1^{\pi_1^k})(x_1^k) = 0, \  \forall x \in \mathcal{S} \backslash x1$.
% Since we are not concerned with recursing over $h$ like in \citet{jin2018q} the rest of our proof is much simpler. 

For any fixed $k, \ \in [K]$, let $t = N_1^k(x1, a1)$ and suppose $(x1, a1)$ was previously taken at step $h=1$ of episode $k_1,...,k_t < k$, Then we have:
\begin{align}
\label{delta bound}
    \delta_1^k &= (V_1^k - V_1^{\pi_1^k})(x1)
    \stackrel{\mycirc{1}}{=} (Q_1^k - Q_1^{\pi_1^k})(x1, a_1^k) \nonumber \\
    &= (Q_1^k - Q_1^*)(x1, a_1^k) + (Q_1^* - Q_1^{\pi_1^k})(x1, a_1^k) \nonumber \\
    &\stackrel{\mycirc{2}}{=} (Q_1^k - Q_1^*)(x1, a1) + (Q_1^* - Q_1^{\pi_1^k})(x1, a1) \nonumber \\
    &\stackrel{\mycirc{3}}{\leq} \alpha_t^0H + \beta_t + [\mathbb{P}_1(V_{2}^* - V_{2}^{\pi_1^k})](x1, a1) \nonumber \\
    &\stackrel{\mycirc{4}}{=} \alpha_t^0H + \beta_t
\end{align}
where $\beta = 2\sum \alpha_t^i b_i \leq O(1) \sqrt{H^3\iota'/t}$.
Equality \textcircled{1} holds because $V_1^k(x1) = \max_{a' \in \mathbb{A}} Q_1^k(x1, a') = Q_1^k(x1, a_1^k)$. 
Equality \textcircled{2} holds because $Q_1^k(x1, a_1^k) = Q_1^*(x1, a_1^k), \ \forall a_1^k \in \mathcal{A} \backslash a1$ and $ Q_1^*(x1, a_1^k) = Q_1^{\pi_h^k}(x1, a_1^k), \ \forall a_1^k \in \mathcal{A} \backslash a1$. 
Inequality \textcircled{3} holds with at least $1-p$ probability by Lemma \ref{bound on Q} and the Bellman equations (\ref{Bellman equations}). 
Finally, equality \textcircled{4} holds since $[\mathbb{P}_h(V_{h+1}^{\pi_h^k} - V_{h+1}^*)](x1, a1) = 0$ for $\ h \geq 1, \forall k \in [K]$.
We now compute the summation $\sum_{k=1}^K \delta_1^k$. Denoting $n_1^k = N_1^k(x1, a1)$, we have:
\begin{equation}
\label{H bound}
    \sum_{k=1}^K \alpha_{n_1^k}^0 H 
    = \sum_{k=1}^K H \cdot \mathbb{I}[n_1^k = 0] 
    \leq H
\end{equation}
We also have:
\begin{equation}
\label{beta bound}
    \sum_{k=1}^K \beta_{n_1^k} 
    \leq O(1) \cdot \sum_{k=1}^K \sqrt{\frac{H^3\iota'}{n_1^k}} 
    = O(1) \sum_{n=1}^{N_1^K(x1,a1)} \sqrt{\frac{H^3\iota'}{n}} 
    \stackrel{\mycirc{1}}{\leq} O(\sqrt{H^2T\iota'})
\end{equation}
where inequality \textcircled{1} is true because the left-hand side of \textcircled{1} is maximized when $N_1^k(x1, a1) = K$, and $\sum_{n=1}^K \sqrt{\frac{1}{n}}$ can be bounded by $O(\sqrt{K})$.
Taking the sum from $k=1$ to $K$ for (\ref{delta bound}) and plugging in (\ref{H bound}) and (\ref{beta bound})  we have:
$$\sum_{k=1}^K \delta_1^k \leq O(H + \sqrt{H^2T\iota'}) \stackrel{\mycirc{1}}{=} O(\sqrt{H^2T\iota'})$$
where \textcircled{1} is true since $\sqrt{H^2T\iota'} \geq H$.

Recall that Lemma \ref{bound on Q} was applied twice in this proof (once in equation (\ref{regret bound with delta}) and once in \textcircled{3} in equation (\ref{delta bound})). 
Note that $(1-p)^2 = 1-2p+p^2 \geq 1-2p$, where the last inequality is applied because the $p^2$ term is small when compared to $2p$.
In summary, we have that $\sum_{k=1}^K \delta_1^k \leq O(\sqrt{H^2T\iota'})$ holds with probability at least $1-2p$. 
Note the term $p$ cannot be greater than $1/2$ to ensure the probability $1-2p$ is non-negative. 
This can be achieved by re-scaling $p \to p/2$ to reduce it's range from $[0, 1]$ to $[0, 1/2]$. 
As such, re-scaling $p$ to $p/2$ finishes the proof.
\end{proof}

%%%%%%%%%%%%%%%%%%%%%%%%%%%%%%%%%%%%%%%%%%%%%%%%%%%%%%%%%%%%%%%%%%%%%%%%

\section{Experimental Setup}

\begin{figure}[t]
    \centering
    \includegraphics[width=0.25\textwidth]{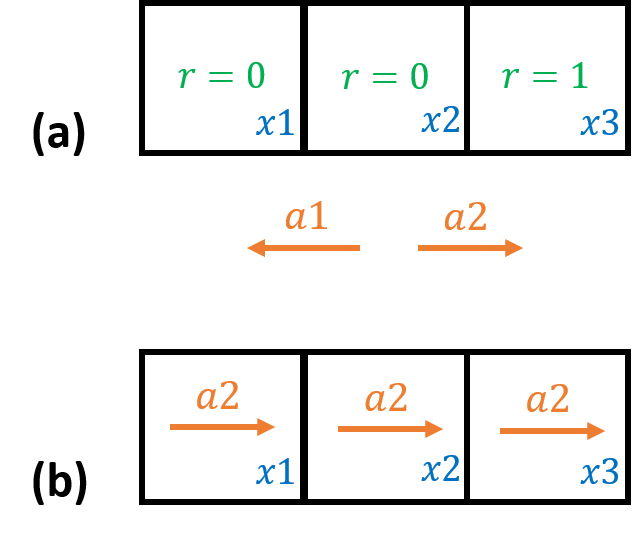}
    \caption{The environment the agent was trained in can be seen in (a). For the purpose of simplifying the image we use the notation $r(x1) = 0$, $r(x2)=0$, and $r(x3)=1$ to represent the reward the agent receives for transitioning into states $x1, x2, x3$ respectively. The optimal policy is shown in (b).}
    \label{fig:environment}
\end{figure}
In the previous section we proved the theoretical total regret of the $Q$-learning with Max-Optimal Initialization algorithm is bounded by $O(\sqrt{H^2T\iota'})$, while the total regret of the $Q$-learning with UCB-Hoeffding algorithm is bounded by $O(\sqrt{H^4SAT\iota})$ \cite{jin2018q}. 
This result implies that our statement of interest holds true theoretically. 
In this section our goal is to empirically show that results are consistent with our theoretical findings.
We do this by choosing a simple tabular environment to compare both algorithms. 
The simple environment will allow for easier interpretation of results. 
We expect the empirical results to hold for larger and more complex environments.

The environment used is a 1-dimensional gridworld, with $3$ states, and $2$ actions (see Figure \ref{fig:environment} (a)). 
The actions are left and right, and when the agent takes an action that causes it to hit a wall it remains in the same state. 
All transitions and rewards are deterministic. 
The agent receives a reward of $1$ if its next state is the right most state and a reward of $0$ otherwise. 
The agent interacts with the environment for exactly $3$ steps each episode. 
The optimal policy is to go right ($a2$) in all states for all steps (See Figure \ref{fig:environment} (b)).
Formally this setting can be represented by a MDP$(\mathcal{S}, \mathcal{A}, H, \mathbb{P}, r)$, where there are $3$ states $\mathcal{S} = \{x1, x2, x3\}$, $2$ actions $\mathcal{A} = \{a1, a2\}$, the horizon is $H = 3$, and the transition dynamics and reward function are as follows for all $h \in [H] = \{1, 2, 3\}$:
\begin{equation*}
\label{transition and reward functions}
    \begin{cases}
        \mathbb{P}_h(x1|x1, a1) = 1 \\ 
        \mathbb{P}_h(x2|x1, a2) = 1 \\ 
        \mathbb{P}_h(x1|x2, a1) = 1 \\
        \mathbb{P}_h(x3|x2, a2) = 1 \\
        \mathbb{P}_h(x2|x3, a1) = 1 \\
        \mathbb{P}_h(x3|x3, a2) = 1 \\ 
    \end{cases} \quad \text{and} \quad 
    \begin{cases}
        r_h(x1, a1) = 0\\
        r_h(x1, a2) = 0\\
        r_h(x2, a1) = 0\\
        r_h(x2, a2) = 1\\
        r_h(x3, a1) = 0\\
        r_h(x3, a2) = 1\\
    \end{cases}
\end{equation*}

Recall that our algorithm makes two important changes to the $Q$-learning with UCB-Hoeffding algorithm from \citet{jin2018q}. Namely, the initialization is changed according to equation (\ref{Initialization equations}) 
% \begin{equation*}
%     \begin{cases}
%         Q_1(x1, a1) \leftarrow H$ and $N_1(x1, a1) \leftarrow 0, \ \\
%         Q_h(x, a) \leftarrow Q^*_h(x, a), \ \forall (x, a, h) \in \mathcal{S} \times \mathcal{A} \times [H] \backslash (x1, a1, h=1)
%     \end{cases}
% \end{equation*}
and the $Q$-function is only updated if the current state, action, and step is $(x1, a1, h=1)$. 
We refer to these two changes as assumption 1 (A1) and assumption 2 (A2) respectively.
We train an agent using three different algorithms:
\begin{enumerate}
    \item $Q$-learning with UCB-Hoeffding \cite{jin2018q}
    \item $Q$-learning with UCB-Hoeffding and Max-Optimal Initialization without A2
    \item $Q$-learning with UCB-Hoeffding and Max-Optimal Initialization (Algorithm \ref{algorithm 1})
\end{enumerate}
From the above enumeration, points 1 and 3 have already been discussed in detail and are shown explicitly as Algorithm \ref{algorithm 1} and Algorithm 1 in \citet{jin2018q} respectively. 
The goal of point 2 is to show that A2 is crucial for our theoretical regret bound (as seen in Theorem \ref{Hoeffding MO}). 
Since A2 is removed in point 2, we expect the total regret to be greater than that of our algorithm.
An important detail is that the $Q$-learning with UCB-Hoeffding and Max-Optimal Initialization without A2 algorithm is the same as our algorithm except with A2 removed, but also with $N_h(x, a)$ initialized to $0$ for all $(x, a, h) \in \mathcal{S} \times \mathcal{A} \times [H]$. Since now the $Q$-function is updated for all states, actions, and steps,  a visit count $N_h(x, a)$ must be kept for all $(x, a, h)$.

An agent was trained using each algorithm for $500$ episodes, $K = 500$. 
We set the probability term $p = 0.05$, corresponding to a probability of at least $1-p = 0.95$ of obtaining a total regret of $O(\sqrt{H^2T\iota'})$ as mentioned in Theorem \ref{Hoeffding MO}. 
The constant used for the bonus $b_t$ was set to $c=0.1$.
For the $Q$-learning with UCB-Hoeffding and Max-Optimal initialization algorithm we set $(x1, a1, h=1)$ as the non-optimal $Q$-value.

Recall that going left ($a1$) in the left-most state $x1$ causes the agent to hit a wall and remain in state $x1$. 
Action $a1$ is not the optimal action in state $x1$, since the agent will only receive a reward of $1$ for transitioning into the right-most state, which going left, $a1$ does not help achieve. 
The optimal action is for the agent to go right ($a2$) from state $x1$. 
Intuitively, the above initialization causes the agent to have low confidence in $(x1, a1, h=1)$ and therefore the agent will explore $(x1, a1, h=1)$ until it is confident enough that $(x1, a1)$ actually provides it with less total reward than $(x1, a2)$ at step $h=1$. 

As a way to measure performance per episode we introduce: 
$$\text{Per Episode Regret} = \text{PER}(k) = V_1^*(x_1^k) -V_1^{\pi_h^k}(x_1^k)$$ 
When the PER is summed over all episodes it gives the total regret. In our experiments, the PER was averaged over $50$ independent runs for each algorithm.

%%%%%%%%%%%%%%%%%%%%%%%%%%%%%%%%%%%%%%%%%%%%%%%%%%%%%%%%%%%%%%%%%%%%%%%%

\section{Experimental Results}

\begin{figure}[t]
    \centering
    \includegraphics[width=0.5\textwidth]{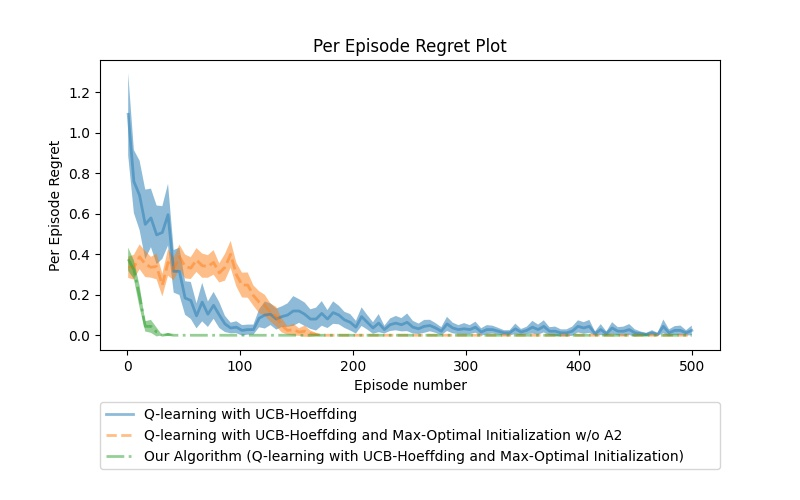}
    \caption{A Per Episode Regret plot of algorithm UCB-H, UCB-H MO, and UCB-H MO A2 averaged over $50$ independent runs. The shaded regions represent a 95\% confidence interval.} 
    \label{fig:PER1}
\end{figure}

Figure \ref{fig:PER1} shows the results of training an agent in the setting mentioned in the previous section. 
For convenience, we will refer to the $Q$-learning with UCB-Hoeffding and Max-Optimal Initialization algorithm as UCB-H MO, and the $Q$-learning with UCB-Hoeffding and Max-Optimal Initialization without A2 algorithm as UCB-H MO A2, and to the $Q$-learning with UCB-Hoeffding as UCB-H. 
In Figure \ref{fig:PER1} we observe that the per episode regret converges to zero fastest for UCB-H MO. 
A faster convergence to zero implies that the total regret of UCB-H MO is lower than that of UCB-H MO A2 and UCB-H, since the total regret is just the sum of the PER over all episodes. 
Recall from the theoretical results that the total regret of UCB-H MO is $O(\sqrt{H^2T\iota'})$, while UCB-H has a total regret of $O(\sqrt{H^4SAT\iota})$. 
Therefore, this is the expected behaviour of UCB-H MO when compared to UCB-H based on our theoretical results. 
Since UCB-H MO also converges faster than UCB-H MO A2 it supports our earlier claim that A2 is crucial to the theoretical regret bound we obtain (see Theorem \ref{Hoeffding MO}).

%%%%%%%%%%%%%%%%%%%%%%%%%%%%%%%%%%%%%%%%%%%%%%%%%%%%%%%%%%%%%%%%%%%%%%%%

\section{Discussion and Future Work}

In summary, we showed that the total regret of $Q$-learning with UCB-Hoeffding and Max-Optimal Initialization is upper bounded by $O(\sqrt{H^2T\iota'})$. This total regret bound is tighter than that of the $Q$-learning with UCB-Hoeffding algorithm, $O(\sqrt{H^4SAT\iota})$ \cite{jin2018q}. This result provides theoretical justification for applying $Q$-function reuse on the $Q$-function with UCB-Hoeffding algorithm. Although we make the strong assumption that all but one of the Q-values are optimal, we believe that this provides a solid starting point for future work to build upon. 

We believe some interesting future directions are:
\begin{enumerate}
    \item Increasing the number of non-optimal $Q$-values. In this work it was assumed that the $Q$-function was optimal for all but one state, action, and step $(x1, a1, h=1)$. One possible next step would be to assume that the $Q$-values of two or more states, actions, and steps are non-optimal. This might make the analysis more complex because the proof of Lemma \ref{Diff in Q} assumed that $V_h^k(x_h^k) = V_h^*(x_h^k)$ for $h \geq 2, \forall k \in [K]$ which would no longer be true if one of the non-optimal $Q$-values occurred for $h \geq 2$.  
    \item Relaxing the $Q$-function optimality assumption for some $(x, a, h) \in \mathcal{S} \times \mathcal{A} \times [H]\backslash (x1, a1, h=1)$ to some notion of near-optimality. This work assumes that the $Q$-functions was optimal for all but one state, action, and step $(x1, a1, h=1)$. This assumption is generally unrealistic for a transferred $Q$-function because an agent rarely learns the optimal $Q$-function in an environment (instead, it often only reaches a near-optimal one). Such cases can potentially be modelled by assuming some sub-optimality of the $Q$-function initialization. For instance, for some subset of states, actions, and steps it could be assumed that $Q_h(x, a) \ge Q^*_h(x, a) - \epsilon, \ \forall (x, a, h), \epsilon \ge 0$
    \item Allowing the $Q$-function to be updated for some $(x, a, h) \in \mathcal{S} \times \mathcal{A} \times [H]\backslash (x1, a1, h=1)$. In this work it was assumed that we had prior knowledge of which states, actions, and steps the $Q$-function was optimally initialized (i.e. $Q_h(x, a) = Q^*_h(x, a), \ \forall \mathcal{S} \times \mathcal{A} \times [H] \backslash (x1, a1, h=1)$) and therefore we were able to explicitly choose to not perform $Q$-function updates at those states, actions and steps. It may not always be the case that this information is known. As such, the algorithm would not update the $Q$-function for all states, actions and steps. Modifications to the analysis done in this work would have to be made to provide an upper bound on the regret of such an algorithm because a $Q$-function update performed at an optimal state, action, and step might depend on the $Q$-function at a non-optimal state, action, and step. Such an update can potentially change the value of the $Q$-function at the optimal state, action, and step, causing for an increase in the number of non-optimal $Q$-values.  
\end{enumerate}

%%%%%%%%%%%%%%%%%%%%%%%%%%%%%%%%%%%%%%%%%%%%%%%%%%%%%%%%%%%%%%%%%%%%%%%%

%%% The acknowledgments section is defined using the "acks" environment
%%% (rather than an unnumbered section). The use of this environment 
%%% ensures the proper identification of the section in the article 
%%% metadata as well as the consistent spelling of the heading.

\begin{acks}
This work has taken place in the Intelligent Robot Learning (IRL) Lab at the University of Alberta, which is supported in part by research grants from the Alberta Machine Intelligence Institute (Amii) and NSERC, as well as a Canada CIFAR AI Chair.
\end{acks}

%%%%%%%%%%%%%%%%%%%%%%%%%%%%%%%%%%%%%%%%%%%%%%%%%%%%%%%%%%%%%%%%%%%%%%%%

%%% The next two lines define, first, the bibliography style to be 
%%% applied, and, second, the bibliography file to be used.

\newpage
\bibliographystyle{ACM-Reference-Format} 
\bibliography{ALA}

%%%%%%%%%%%%%%%%%%%%%%%%%%%%%%%%%%%%%%%%%%%%%%%%%%%%%%%%%%%%%%%%%%%%%%%%

\end{document}